\documentclass[letterpaper, 10 pt, conference]{ieeeconf}

\IEEEoverridecommandlockouts                              
\usepackage{amsmath,amssymb,amsfonts}
\usepackage{algorithmic}
\usepackage{algorithm}
\usepackage{graphicx}
\usepackage{textcomp}
\usepackage{booktabs}
\usepackage{comment}
\usepackage{xcolor}
\usepackage{dsfont}
\usepackage{bm}
\usepackage{balance}
\newcommand{\ind}{\mathds{1}}

\makeatletter
\@ifundefined{proof}{}{ }
\makeatother

\usepackage[fancythm,fancybb]{jphmacros2e}

\title{\LARGE \bf
Robust Multi-Agent Reinforcement Learning for Small UAS Separation Assurance under
GPS Degradation and Spoofing
}

\author{Alex Zongo$^{\dagger}$, Filippos Fotiadis$^{\star}$, Ufuk Topcu$^{\star}$ and Peng Wei$^{\dagger}$
\thanks{This work was supported by NASA under grant 80NSSC24M0070, and by ONR under grant N00014-22-1-2703.}
\thanks{$^{\dagger}$A. Zongo and P. Wei are with the Department of Mechanical and Aerospace Engineering, George Washington University, Washington, D.C. 20052, USA. Email: {\tt\small\{a.zongo, pwei\}@gwu.edu}.}
\thanks{$^{\star}$F. Fotiadis and U. Topcu are with the Oden Institute for Computational Engineering \& Sciences, University of Texas at Austin, Austin, TX 78712, USA. Email:  \tt\small\{ffotiadis, utopcu\}@utexas.edu.}
}

\begin{document}

\maketitle
\thispagestyle{empty}
\pagestyle{empty}

\begin{abstract}

We address robust separation assurance for small Unmanned Aircraft Systems (sUAS) under GPS degradation and spoofing via Multi-Agent Reinforcement Learning (MARL). In cooperative surveillance, each aircraft (or agent) broadcasts its GPS-derived position; when such position broadcasts are corrupted, the entire observed air traffic state becomes unreliable. We cast this state observation corruption as a zero-sum game between the agents and an adversary: with probability R, the adversary perturbs the observed state to maximally degrade each agent's safety performance. We derive a closed-form expression for this adversarial perturbation, bypassing the iterative inner optimization of adversarial training entirely and enabling linear-time evaluation in the state dimension. We show that this expression approximates the exact minimizer of the value function over the modeled uncertainty set with second-order accuracy. We further bound the safety performance gap between clean and corrupted observations, showing that it degrades at most linearly with the corruption probability under Kullback-Leibler regularization.
Finally, we integrate the closed-form adversarial policy into a MARL policy gradient algorithm to obtain a robust counter-policy for the agents. In a high-density sUAS simulation, we observe near-zero collision rates under corruption levels up to 35\%, outperforming a baseline policy trained without adversarial perturbations.

\end{abstract}

\section{Introduction}

The proliferation of small Unmanned Aircraft Systems (sUAS) in urban airspaces is reshaping logistics and emergency response. Commercial operators, including Zipline and Wing, now conduct beyond-visual-line-of-sight package deliveries under FAA authorization \cite{faa_part135_2023,marquand_bvlos_dallas_2024}. As these operations scale, autonomous separation assurance becomes imperative: vehicles must avoid conflicts without human intervention, relying entirely on onboard sensing and cooperative information exchange.

This autonomy, however, rests on fragile infrastructure. Small UAS depend on Global Positioning System (GPS) for position and velocity estimation, yet urban canyons induce multipath errors of 10--60\,m \cite{Peretic2025gnsserrors, Gutierrez2024multipath}, and GPS signals are vulnerable to spoofing attacks with field-demonstrated success rates of 5--40\% \cite{HarshadUAVTakeOver, AndrewUAVcontrolViaGPS}. 
Moreover, unlike commercial aviation's radar-verified surveillance, sUAS traffic management relies on cooperative position sharing via Remote Identification (Remote ID) \cite{faa_remoteid2021}, where each broadcast is itself derived from the vehicle's onboard GPS receiver. When GPS is corrupted, an sUAS's observation includes not only its own erroneous position but also the corrupted broadcasts of every vehicle in its vicinity. The resulting joint state observation may thus be adversarially perturbed in every component simultaneously, with perturbations correlated through a shared corruption source.

\begin{figure}[t]
    \centering
    \includegraphics[width=0.9\linewidth]{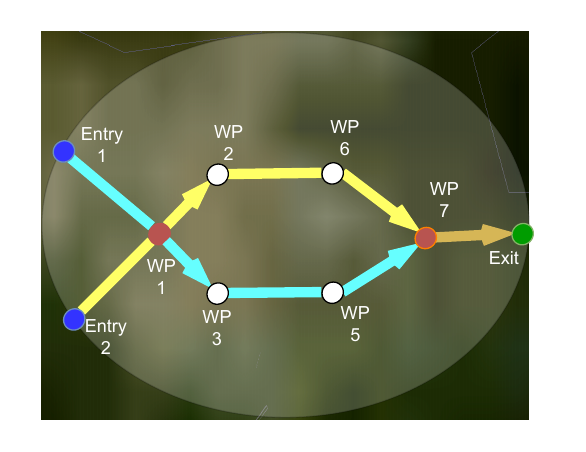}
    \caption{An en-route airspace setting for sUAS package delivery. WP = WayPoint. The UASs flying in this airspace need to maintain safe separation when they are impacted by degraded or spoofed GPS signal. The network includes two routes that first cross at \emph{WP 1} (in red), then merge later at \emph{WP 7} (light red). Each route spans about $10$ km, illustrating a compact yet realistic urban scenario for sUAS operations.}
    \label{fig:airspace}
\end{figure}

\subsection*{Related Work and Contributions}

Multi-agent reinforcement learning (MARL) has emerged as a compelling paradigm for separation assurance. Brittain et al. \cite{brittain2020deepmultiagentreinforcementlearning} demonstrated effective conflict resolution using attention-based architectures that scale gracefully to variable intruder counts, while Chen et al. \cite{chen2023integratedconflictmanagementuam} integrated tactical deconfliction with strategic traffic demand management. These approaches achieve impressive performance but assume perfect state observation, an assumption incompatible with GPS-dependent cooperative surveillance.

The vulnerability of learned policies to observation perturbations has motivated robust training methods. Adversarial approaches expose agents to gradient-based attacks \cite{GoodfellowShlensSzegedy2015,HuangEtAl2017} or learned adversaries \cite{PintoDavidsonSukthankarGupta2017, GleaveEtAl2019}. The Robust Markov Decision Process (MDP) framework captures worst-case outcomes via uncertainty sets over transitions \cite{Iyengar2005,Nilim2005}. Recent extensions address observation-level attacks: Zhang et al. \cite{zhang_robust_rl_state} introduced state-adversarial perturbations for single-agent settings, while Wang and Zou \cite{wang2022policygradientmethodrobust} developed policy gradients under $R$-contamination, where the state transition itself is corrupted with probability $R$. These state-adversarial formulations are well-suited to GPS spoofing, where vehicle dynamics are nominal, but observation is degraded. However, existing work addresses single-agent settings and does not consider the correlated full-state corruption arising in cooperative multi-agent surveillance.

Control Barrier Functions (CBFs) offer an alternative path to safety via forward-invariance guarantees \cite{ames2019}. Neural CBF extensions \cite{dawson2022} and multi-agent formulations such as GCBF++ \cite{qin2021} have shown promise. Yet, CBF methods often require known dynamics, access to true state observations, or typically centralized coordination, all of which are assumptions violated precisely when GPS is unreliable, which motivates the learning-based approach of this study.

To the best of our knowledge, no existing framework addresses robust multi-agent separation assurance under full-state observation corruption, where both the controlled aircraft's and neighboring aircraft's positions may be simultaneously degraded, while providing formal performance guarantees and supporting decentralized execution. We bridge this gap by combining $R$-contamination modeling of the complete traffic picture with analytical adversarial optimization and decision-invariance regularization. To that end, our contributions are as follows.

\begin{enumerate}
\item We cast GPS observation corruption as a zero-sum game between the adversarial and the UAS agents, and derive a closed-form approximate solution for the worst-case adversarial perturbation within the modeled uncertainty set (Theorem \ref{thm:first_order_adversary}). 
\item We prove that this closed-form adversarial perturbation approximates the worst-case perturbation defined by the formulated problem over the uncertainty set, with error that is second-order in the perturbation magnitude  (Theorem \ref{thm:remainder_bound}).  
\item We prove that Kullback-Leibler (KL)-based policy regularization between clean and adversarial observations bounds the expected performance loss induced by corrupted observations, yielding a principled robustness-performance tradeoff  (Proposition \ref{prop:performance_bound}). 
\item In high-density sUAS simulation, the robust policy achieves near-zero collision rates under corruption levels up to $35\%$, outperforming a baseline policy trained without adversarial perturbations. 
\end{enumerate}

\textit{Notation:} Bold symbols denote matrices. We write $\langle \bm{A}, \bm{B} \rangle = \sum_{ij} A_{ij}B_{ij}$ for the Frobenius inner product, $\bm{A}\odot\bm{B}$ for element-wise multiplication, and $\mathrm{sign}(\bm{A})$ for the element-wise sign operator. $\| \cdot \|_p$ is the $L_p$-norm operator. $|\cdot|_\mathrm{TV}=\frac{1}{2}\|\cdot\|_1$ denotes the total variation distance operator. Inequalities between matrices are understood \mbox{component-wise}. Absolute values are taken element-wise.  For a set $\mathcal{X}$, $\Delta(\mathcal{X})$ denotes the ($|\mathcal{X} |-1$)-dimensional probability simplex on $\mathcal{X}$. For a variable $x\in\mathbb R$ and fixed values $a,b$, $\mathrm{clip}(x, a, b)$ clamps $x$ to $[a,b]$.

\section{Problem Formulation}
\label{sec:problem}

We study autonomous separation assurance for sUAS operating in a structured low-altitude airspace, as illustrated in Fig. \ref{fig:airspace}. Each aircraft follows a predefined route through a sequence of waypoints. The control objective for an sUAS agent is to regulate its speed so as to maintain safe separation from nearby traffic while preserving efficient progress toward its destination. We formulate this problem as decentralized multi-agent reinforcement learning: $N$ homogeneous agents (with identical dynamics) apply the same learned policy using local observations, with each agent viewing itself as the \emph{ownship} and nearby aircraft as \emph{intruders}.

\subsection{States, Actions, and Dynamics} 
\label{sec:state_action_dynamics}
 
At time $t$, the true local traffic state from the ownship’s viewpoint is denoted by $\bm{S}_t \in \mathcal{S}$, which collects information relevant for separation assurance, including the ownship's progress and speed along its route, together with the relative geometric and kinematic information of nearby intruders.

For each ownship, the state is represented as $\bm S_t \in \mathbb{R}^{(1+m)\times n}$, where the first row corresponds to the ownship and the remaining rows correspond to up to $m$ nearby intruders. Since all rows of $\bm{S}_t$ are derived from GPS (both the ownship's own coordinates and the intruders' broadcasts), GPS corruption can affect the entire state matrix simultaneously, coupling observation errors across all $(1+m)$ rows.

Each aircraft state is given by $s=[x,~ y,~ \psi,~ v,~ d_g,~ u_-] \in \mathbb{R}^{1\times n}$ with $n=6$, where $(x, y)$ denotes planar position, $\psi$ heading, $v$ speed, $d_g$ distance-to-destination, and $u_-$ the previous action. The action space consists of discrete speed adjustments $\mathcal{A}=\{-\Delta v, 0, +\Delta v\}$, where $\Delta v=2.57 \ \rm{m/s}$ \cite{brittain2020deepmultiagentreinforcementlearning,faa_7110_65_speed_adjustment}. The action $u_t \in \mathcal{A}$ is a guidance-level speed command tracked by the vehicle’s inner-loop flight controller.

The aircraft state transitions are modeled by discrete-time point-mass kinematics \cite{Hoekstra2016BlueSky}
\begin{equation}
    s_{t+1} = f(s_t, u_t) = a(s_t)+b(s_t)u_t
    \label{eq:single_dynamics}
\end{equation} with 
\begin{multline}
a(s)^{\!\top}\!=\![\,x{+}(v\cos\psi{+}w_E)\Delta t,\ y{+}(v\sin\psi{+}w_N)\Delta t,\\ h(\psi),\ v,\ d_g{-}v\Delta t,\ 0\,],\\
b(s)^{\!\top}\!=\![\,\cos\psi\,\Delta t,\ \sin\psi\,\Delta t,\ 0,\ 1,\ -\Delta t,\ 1\,].
\label{eq:ab_single}
\end{multline}
The terms $(w_E,w_N)$ are wind components, $\Delta t=1\,\text{s}$ is the integration time step, and $h(\psi)$ updates the route heading at waypoint transitions. This guidance-level model captures the traffic evolution relevant to separation assurance as used in traffic-level simulations \cite{brittain2020deepmultiagentreinforcementlearning,Hoekstra2016BlueSky}.

\subsection{Markov Decision Process with Adversarial Observation Corruption}
\label{sec:mdp_formulation}

We model the robust sUAS separation and navigation problem as a Markov Decision Process (MDP) with adversarially corrupted observations. The MDP tuple $(\mathcal{S}, \mathcal{A}, P, r, \gamma)$ is described by a state space $\mathcal{S}$, an action space $\mathcal{A}$, a transition kernel $P= \{ p^u_{\bm S} \in \Delta(\mathcal{S}), u \in \mathcal{A}, \bm S \in \mathcal{S}  \}$, where $p^u_{\bm S}$ is the distribution of the next state over $\mathcal{S}$ upon taking action $u$ in state $\bm S$; a reward function $r: \mathcal{S} \times \mathcal{A} \to \mathbb{R}$; and a discount factor $\gamma \in [0, 1)$.

Let $\tilde{\bm S}_t$ denote the observation available to the agent at time $t$. This observation may be corrupted by spoofing or sensor errors, so it may be different from the true state ${\bm S}_t$. However, since the observation and the true state have the same structure, we write the agent's policy as \( \pi:\mathcal{S}\to\Delta(\mathcal{A}) \) with action sampled according to \(u_t \sim \pi(\cdot \mid \tilde{\bm S}_t) \). 
After the agent applies action $u_t$, the environment evolves to the next true state $\bm S_{t+1}\sim p_{\bm S_t}^{u_t}$ and produces reward \(r( \bm S_{t+1},u_t )\).

Unlike classical robust MDP formulations \cite{Iyengar2005,Nilim2005} which place uncertainty on the transition kernel, we consider uncertainty in the agent's observation. This choice is motivated by GPS spoofing and related sensor attacks: the physical airspace state evolves according to nominal dynamics, but the agent may act on a corrupted observation of that state. Such state-adversarial models have been studied in robust reinforcement learning \cite{zhang_robust_rl_state,wang2022policygradientmethodrobust} and are well aligned with empirical evidence on Global Navigation Satellite Systems (GNSS) spoofing in UAV systems \cite{HarshadUAVTakeOver,AndrewUAVcontrolViaGPS,psiaki2016gnss}.

We model GPS corruption through $R$-contamination uncertainty. Specifically, conditional on the true next state $\bm S_{t+1}$, the observation $\tilde{\bm S}_{t+1}$ available to the agent is generated as
\begin{equation}
    \tilde{\bm S}_{t+1} \sim O_R(\cdot \mid \bm S_{t+1}),
    \label{eq:obs_kernel}
\end{equation}
where $R\in[0,1)$ denotes the corruption probability. 
\begin{definition}
    \label{def:rcontam}
    Given the true next state $\bm S_{t+1}$, the corrupted observation follows:
    \begin{equation}
     O_R(\cdot \mid \bm S_{t+1}) = (1-R) \cdot \ind_{\bm{S}_{t+1}} + R \cdot \ind_{\bm{\Xi}_{t+1}},
    \label{eq:r_contam}
    \end{equation}
    where $\ind_{\bm{S}}$ denotes a point mass at $\bm{S}$, and the adversarial observation $\bm{\Xi}_{t+1}$ belongs to the state-dependent uncertainty set:
    \begin{equation}
    \Omega(\bm S_{t+1}) = \left\{ \bm{\Xi} \in \mathcal{S} : |\bm{\Xi} - \bm{S}_{t+1}| \leq \bm{\kappa} \right\}.
    \label{eq:uncertainty_set}
\end{equation}
Here, $\bm \kappa \in \mathbb{R}^{|\bm S|}_{\ge 0}$ specifies element-wise corruption bounds.
\end{definition}
Thus, with probability $1-R$ the next observation is accurate, and with probability $R$ it is replaced by an adversarial observation within the bounded uncertainty set. The uncertainty bound $\bm \kappa$ reflects physically plausible GPS and sensing errors.  In particular, the position components are calibrated to empirically reported urban GNSS error magnitudes \cite{Peretic2025gnsserrors}. 
This corruption model captures bounded but potentially adversarial deviations in the agent's perceived traffic configuration. 

A key modeling choice is that the reward depends on the true next state $\bm S_{t+1}$ rather than the corrupted observation $\tilde{\bm S}_{t+1}$. This reflects the fact that collisions and separation violations are physical events: they occur in the real airspace regardless of what the agent perceives. Since training is performed in simulation prior to deployment, computing such true rewards is realistic.

\subsection{Reward}
\label{sec:reward}

The reward is computed from the true next state and encourages safe separation with minimal speed changes. Specifically, we design it as

\begin{equation}
r(\bm S_{t+1},u_t)=\sum_{j=1}^{m} q\!\left(d_{oj}(\bm S_{t+1})\right)+g(u_t),
\label{eq:reward}
\end{equation}
where
\begin{equation}
d_{oj}(\bm S_{t+1})=\left\|(x^o_{t+1},y^o_{t+1})-(x^j_{t+1},y^j_{t+1})\right\|_2
\end{equation}
is the distance between the ownship $o$ and intruder $j$. The function $q(\cdot)$ penalizes unsafe proximity, i.e. loss-of-separation, while the function $g(\cdot)$ penalizes control effort.

\subsection{Robust Objective}
\label{sec:robust_objective}

Given this reward, the value function of a stationary policy $\pi$ (evaluated on the true state) is:
\begin{equation}
    V^\pi(\bm S_t) = \mathbb{E}_\pi\!\left[  \sum_{k=0}^{\infty}\gamma^k r(\bm S_{t+k+1},u_{t+k})
\,\middle|\,
\bm S_t  \right].
\label{eq:nominal_value}
\end{equation}
However, under observation corruption, the policy no longer acts on the true state $\bm S_t$ but on the corrupted observation $\tilde{\bm S}_t$. The controlled agent's performance is therefore affected through the selected action and, recursively, through the future value of the resulting trajectory.

Under Definition \ref{def:rcontam}, for a given action $u_t$, we define the one-step robust value objective as
\begin{align}
    \mathcal{J}^\pi_{\mathrm{rob}}(\bm S_t,u_t,\bm \Xi_{t+1}) &= r(\bm S_{t+1},u_t) + \gamma\Bigl[ (1-R)V^\pi(\bm S_{t+1}) \notag \\ &+ R\,V^\pi(\bm \Xi_{t+1}) \Bigr],
\label{eq:robust_successor_objective}
\end{align}
where $\bm S_{t+1}\sim p_{\bm S_t}^{u_t}$ is the true next state and $\bm \Xi_{t+1}\in\Omega(\bm S_{t+1})$ is chosen within the uncertainty set \eqref{eq:uncertainty_set}. 
The worst-case adversarial observation is then defined by
\begin{equation}
\bm \Xi_{t+1}^\star \in \underset{\bm \Xi \in \Omega(\bm S_{t+1})}{\arg\min} \mathcal{J}^\pi_{\mathrm{rob}}(\bm S_t,u_t,\bm \Xi) .
\label{eq:worst_case_xi}
\end{equation}
Since $r(\bm S_{t+1},u_t)$ and $V^\pi(\bm S_{t+1})$ do not depend on $\bm \Xi$, the minimization is equivalently
\begin{equation}
    \bm \Xi_{t+1}^\star \in \underset{\bm \Xi \in \Omega(\bm S_{t+1})}{\arg\min} \, V^\pi(\bm \Xi).
    \label{eq:worst_case_xi_equiv}
\end{equation}
This yields the one-step robust value target of policy $\pi$ 
\begin{align}
    V^\pi_{\mathrm{rob}}(\bm S_t) &= \mathbb{E}_\pi \Bigl[r(\bm S_{t+1},u_t) + \gamma\bigl((1-R)V^\pi(\bm S_{t+1}) \notag \\ &+ R \min_{\bm \Xi \in \Omega(\bm S_{t+1})} V^\pi(\bm \Xi)\bigr) \,\Big| \,\bm S_t \Bigr].
    \label{eq:robust_value}
\end{align}
The corresponding robust control objective for the aircraft is to find a policy that maximizes expected discounted return under this worst-case bounded corruption model: 
\begin{equation}
\pi^\star \in \arg\max_{\pi}~ \mathbb{E}\!\left[V^\pi_{\mathrm{rob}}(\bm S_0)\right].
\label{eq:robust_policy_objective}
\end{equation}
However, direct optimization of \eqref{eq:worst_case_xi}--\eqref{eq:robust_policy_objective} is generally intractable for nonlinear value approximators. We therefore develop a tractable policy-gradient method based on a first-order characterization of the worst-case adversarial observation, coupled with regularization that promotes perturbation-invariant policy behavior.

\section{Robust Policy Optimization Method}
\label{sec:solution}

\subsection{First-Order Worst-Case Adversarial Observation}
\label{sec:first_order_adversary}
The robust objective in Section \ref{sec:robust_objective} requires solving the inner minimization in \eqref{eq:robust_policy_objective} described by \eqref{eq:robust_value} and \eqref{eq:worst_case_xi_equiv}. 
To that end, we make the following assumption.
\begin{assumption}
\label{assump:local_diff}
    The value function $V^\pi:\mathcal{S} \to\mathbb R$ is differentiable. 
\end{assumption}
This assumption is standard for local first-order analysis and is reasonable for smooth neural value approximators on bounded neighborhoods of the state space. 
Under such assumption, a first-order Taylor expansion of $V^\pi(\bm \Xi)$ around $\bm S_{t+1}$ yields
\begin{multline}
    V^\pi(\bm \Xi) = V^\pi(\bm S_{t+1}) + \left\langle \nabla V^\pi(\bm S_{t+1}),\, \bm \Xi-\bm S_{t+1} \right\rangle\\ + o(\| \bm \Xi-\bm S_{t+1}  \|).
    \label{eq:first_order_value}
\end{multline}
Neglecting higher-order terms inside $o(\cdot)$, we approximate
\begin{multline}
    \min_{\bm \Xi \in \Omega(\bm S_{t+1})} V^\pi(\bm \Xi) \;\approx\; \min_{\bm \Xi \in \Omega(\bm S_{t+1})} \bigl[V^\pi(\bm S_{t+1}) \\ 
    + \bigl\langle\nabla V^\pi(\bm S_{t+1}), \, \bm \Xi-\bm S_{t+1} \bigr\rangle \bigr].
\label{eq:linearized_adversary_problem}
\end{multline}
This leads to a closed-form expression for the worst-case adversarial perturbation, given in the following theorem.
\begin{theorem}
    \label{thm:first_order_adversary}
    Let Assumption \ref{assump:local_diff} hold, and let the uncertainty set be componentwise bounded as in \eqref{eq:uncertainty_set}.
    Then, a minimizer of the first-order (FO) adversarial problem \eqref{eq:linearized_adversary_problem} is
    \begin{equation}
        \bm \Xi^{\star,\mathrm{FO}} = \bm S_{t+1} - \bm \kappa \odot \mathrm{sign}\bigl( \nabla V^\pi(\bm S_{t+1}) \bigr).
        \label{eq:first_order_adversarial_state}
    \end{equation}
\end{theorem}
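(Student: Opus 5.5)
The plan is to reduce \eqref{eq:linearized_adversary_problem} to a separable linear program over a box and solve it one coordinate at a time. Write $\bm G = \nabla V^\pi(\bm S_{t+1})$, which exists by Assumption \ref{assump:local_diff}, and substitute $\bm \Delta = \bm \Xi - \bm S_{t+1}$.

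The term $V^\pi(\bm S_{t+1})$ is constant in $\bm \Xi$, so it drops out of the minimization. The problem becomes
\begin{equation*}
\min_{\bm \Delta:\ |\bm \Delta| \le \bm \kappa} \ \langle \bm G, \bm \Delta\rangle = \min_{|\bm \Delta| \le \bm \kappa} \sum_{ij} G_{ij}\Delta_{ij}.
\end{equation*}
Here I treat $\Omega(\bm S_{t+1})$ as the full box $\{|\bm \Xi - \bm S_{t+1}|\le \bm\kappa\}$. I will state this explicitly as the reading of \eqref{eq:uncertainty_set}, i.e.\ the box is assumed to lie in $\mathcal S$. Without that, the constraint $\bm\Xi\in\mathcal S$ could cut the box, and one would need a clipping step.

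The feasible set is a Cartesian product of intervals $[-\kappa_{ij},\kappa_{ij}]$, and the objective is a sum of one-dimensional terms. The minimization therefore decouples into independent scalar problems $\min_{|\Delta_{ij}|\le\kappa_{ij}} G_{ij}\Delta_{ij}$.

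Each scalar problem minimizes a linear function over an interval, so the minimum is attained at an endpoint:
\begin{itemize}
\item if $G_{ij}>0$, take $\Delta_{ij}=-\kappa_{ij}$;
\item if $G_{ij}<0$, take $\Delta_{ij}=+\kappa_{ij}$;
\item if $G_{ij}=0$, every feasible value is optimal, including $0=-\kappa_{ij}\,\mathrm{sign}(0)$.
\end{itemize}
In all three cases $\Delta_{ij}^\star=-\kappa_{ij}\,\mathrm{sign}(G_{ij})$ is a minimizer, with optimal value $-\kappa_{ij}|G_{ij}|$. I will make the zero case explicit, since it is why the theorem says ``a minimizer'' rather than ``the minimizer.''

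Reassembling the coordinates gives $\bm \Delta^\star = -\bm\kappa\odot\mathrm{sign}(\bm G)$, which is \eqref{eq:first_order_adversarial_state}. As a check, this point is optimal for the sum: for any feasible $\bm\Delta$,
\begin{equation*}
\langle \bm G,\bm\Delta\rangle \ge -\sum_{ij}\kappa_{ij}|G_{ij}| = \langle \bm G, \bm\Delta^\star\rangle,
\end{equation*}
which follows termwise from $G_{ij}\Delta_{ij}\ge -|G_{ij}||\Delta_{ij}|\ge -|G_{ij}|\kappa_{ij}$. This weighted-$\ell_1$ (Hölder-type) bound doubles as a one-line proof of optimality. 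It also gives the optimal first-order value $V^\pi(\bm S_{t+1}) - \langle \bm\kappa, |\bm G|\rangle$, which may be useful for Theorem \ref{thm:remainder_bound}.

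I expect no real obstacle; the argument is elementary. The only delicate point is the modeling issue above: whether $\Omega$ is exactly the box or the box intersected with $\mathcal S$. I would handle it by assuming the box is contained in $\mathcal S$, which is natural since $\mathcal S$ is essentially a Euclidean matrix space.
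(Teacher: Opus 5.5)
Your proof is correct and is essentially the paper's argument: substitute $\bm\delta=\bm\Xi-\bm S_{t+1}$, drop the constant $V^\pi(\bm S_{t+1})$, note that the linear objective decouples over the box $|\bm\delta|\le\bm\kappa$, and take $\delta_{ij}^\star=-\kappa_{ij}\,\mathrm{sign}(\partial V^\pi/\partial S_{ij})$ in each coordinate. Your explicit zero-gradient case, the H\"older-type global optimality check, and the caveat that the box must lie in $\mathcal S$ are refinements the paper leaves implicit (it silently treats $\Omega(\bm S_{t+1})$ as the full box), and your optimal value $V^\pi(\bm S_{t+1})-\langle\bm\kappa,|\bm G|\rangle$ is exactly Corollary~\ref{cor:first_order_value_drop}.
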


\begin{proof}
Letting $\bm \delta=\bm \Xi-\bm S_{t+1}$, the constraint $\Xi \in \Omega(\bm S_{t+1})$ becomes $|\bm \delta|\leq \bm \kappa$. Therefore, since $V^\pi(\bm S_{t+1})$ is constant with respect to $\bm \Xi$, \eqref{eq:linearized_adversary_problem} reduces to 
\begin{equation}
    \min_{|\bm \delta|\le \bm \kappa} \left\langle \nabla V^\pi(\bm S_{t+1}),\, \bm \delta \right\rangle.
\label{eq:delta_problem}
\end{equation}
Also,
    \(
    \left\langle \nabla V^\pi(\bm S_{t+1}), \bm \delta \right\rangle = \sum_{i,j} \frac{\partial V^\pi}{\partial S_{ij}}(\bm S_{t+1})\,\delta_{ij}, \ \ |\delta_{ij}|\le \kappa_{ij}.
    \)
    Hence, each term in \eqref{eq:delta_problem} is minimized independently by taking 
    \(
    \delta_{ij}^\star = -\kappa_{ij}\,\mathrm{sign}\!\left(\frac{\partial V^\pi}{\partial S_{ij}}(\bm S_{t+1})\right). 
    \)
    Collecting the coordinates yields 
    \(
    \bm \delta^\star=-\bm \kappa \odot \mathrm{sign}\!\left(\nabla V^\pi(\bm S_{t+1})\right).
    \)
    Substituting $\bm \Xi=\bm S_{t+1} + \bm \delta$ proves \eqref{eq:first_order_adversarial_state}.
\end{proof}

Using Theorem \ref{thm:first_order_adversary}, we also immediately get a closed-form expression for the worst-case value function as follows. 
\begin{corollary}
    \label{cor:first_order_value_drop}
    Under the conditions of Theorem \ref{thm:first_order_adversary},
    \begin{equation}
        \min_{\bm \Xi \in \Omega(\bm S_{t+1})} V^\pi(\bm \Xi)~\approx~V^\pi(\bm S_{t+1}) - \bigl\| \bm \kappa \odot \nabla V^\pi(\bm S_{t+1}) \bigr\|_1. 
        \label{eq:first_order_value_drop}
    \end{equation}
\end{corollary}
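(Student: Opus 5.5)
The plan is to substitute the first-order minimizer of Theorem \ref{thm:first_order_adversary} into the linearized objective in \eqref{eq:linearized_adversary_problem} and then evaluate that objective in closed form. The symbol $\approx$ in \eqref{eq:first_order_value_drop} will be read in the same sense as in \eqref{eq:linearized_adversary_problem}: the higher-order remainder $o(\|\bm \Xi-\bm S_{t+1}\|)$ from \eqref{eq:first_order_value} is neglected. Under that reading, the corollary is an exact statement about the minimum value of the linearized problem.

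The steps, in order, are as follows.

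\emph{Step 1.} Write $\bm \delta=\bm \Xi-\bm S_{t+1}$. The right-hand side of \eqref{eq:linearized_adversary_problem} then equals
\[
V^\pi(\bm S_{t+1})+\min_{|\bm \delta|\le\bm \kappa}\langle\nabla V^\pi(\bm S_{t+1}),\bm \delta\rangle ,
\]
as in \eqref{eq:delta_problem}.

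\emph{Step 2.} Plug in the minimizer $\bm \delta^\star=-\bm \kappa\odot\mathrm{sign}(\nabla V^\pi(\bm S_{t+1}))$ from the proof of Theorem \ref{thm:first_order_adversary}. Componentwise, this gives
\[
\langle\nabla V^\pi,\bm \delta^\star\rangle=-\sum_{i,j}\kappa_{ij}\,\frac{\partial V^\pi}{\partial S_{ij}}\,\mathrm{sign}\!\left(\frac{\partial V^\pi}{\partial S_{ij}}\right)=-\sum_{i,j}\kappa_{ij}\left|\frac{\partial V^\pi}{\partial S_{ij}}\right| .
\]
This uses $g\,\mathrm{sign}(g)=|g|$. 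The convention for $\mathrm{sign}(0)$ does not matter here, since those terms vanish either way.

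\emph{Step 3.} Because $\bm \kappa\ge 0$ elementwise, we have $\kappa_{ij}|g_{ij}|=|\kappa_{ij}g_{ij}|$. The sum from Step 2 is therefore exactly $\|\bm \kappa\odot\nabla V^\pi(\bm S_{t+1})\|_1$, with $\|\cdot\|_1$ taken as the entrywise $L_1$ norm of the matrix. Combining this with Step 1 yields \eqref{eq:first_order_value_drop}.

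The only point that needs care is the approximation step itself. The exact minimum of $V^\pi$ over $\Omega(\bm S_{t+1})$ differs from the linearized minimum by at most $\sup_{\bm \Xi\in\Omega}|o(\|\bm \Xi-\bm S_{t+1}\|)|$. This is because the minima of two functions that differ uniformly by $\epsilon$ themselves differ by at most $\epsilon$. That bound is $o(\|\bm \kappa\|)$ by differentiability (Assumption \ref{assump:local_diff}), which justifies the $\approx$ as a first-order statement. A sharper quantification of this error, such as second-order accuracy, would require additional smoothness and is deferred to later results; I would note this explicitly rather than prove it here.
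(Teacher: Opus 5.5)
Your proposal is correct and matches the paper's proof: both substitute $\bm\delta^\star=-\bm\kappa\odot\mathrm{sign}(\nabla V^\pi(\bm S_{t+1}))$ from Theorem~\ref{thm:first_order_adversary} into the linearized objective, then use $g\,\mathrm{sign}(g)=|g|$ and $\bm\kappa\ge 0$ to recover the entrywise $L_1$ norm. Your extra remark is also valid: neglecting the remainder shifts the minimum by at most $o(\|\bm\kappa\|)$, a point the paper does not make here, since it defers the quantitative (second-order) error to Theorem~\ref{thm:remainder_bound}.
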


\begin{proof}
    Substituting $\bm \delta^\star = -\bm \kappa \odot \mathrm{sign}(\nabla V^\pi(\bm S_{t+1}))$ into the linearized objective gives
    \begin{multline*}
    V^\pi(\bm S_{t+1})+ \left\langle \nabla V^\pi(\bm S_{t+1}),\bm \delta^\star\right\rangle\\=V^\pi(\bm S_{t+1})-\sum_{i,j} \kappa_{ij} \left| \frac{\partial V^\pi}{\partial S_{ij}}(\bm S_{t+1}) \right|,
    \end{multline*}
    which is exactly \eqref{eq:first_order_value_drop}.
\end{proof}

Next, we also quantify the accuracy of this first-order approximation, and we further assume local smoothness of the value function to that end. 

\begin{assumption}
    \label{assump:lipschitz_grad}
    The value function $V^\pi:\mathcal S \to \mathbb R$ has $L_V$-Lipschitz gradient in the neighborhood of $\bm S_{t+1}$, i.e.,
    \begin{equation}
        \|\nabla V^\pi(\bm x)-\nabla V^\pi(\bm y)\|_2 \le L_V\|\bm x-\bm y\|_2,
        \label{eq:lipschitz_grad}
    \end{equation}
for all $\bm x,\bm y$ in that neighborhood.
\end{assumption}

\begin{theorem}
    \label{thm:remainder_bound}
    Under Assumptions \ref{assump:local_diff} and \ref{assump:lipschitz_grad}, the first-order approximation error of the worst-case value over $\Omega(\bm S_{t+1})$ satisfies
    \begin{multline}
        \Bigl| \min_{\bm \Xi \in \Omega(\bm S_{t+1})} V^\pi(\bm \Xi)-\bigl(V^\pi(\bm S_{t+1})- \| \bm \kappa \odot \nabla V^\pi(\bm S_{t+1}) \|_1 \bigr) \Bigr|\\ \le \frac{L_V}{2}\| \bm \kappa \|_2^2.
        \label{eq:remainder_bound}
    \end{multline}
\end{theorem}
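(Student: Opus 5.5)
We would argue via the standard quadratic remainder bound for functions with Lipschitz gradient: a two-sided estimate of $V^\pi(\bm \Xi)$ holds uniformly over $\Omega(\bm S_{t+1})$, and then the minimum is compared with the minimum of the linearization computed in Theorem~\ref{thm:first_order_adversary} and Corollary~\ref{cor:first_order_value_drop}. Throughout, write $\bm\delta=\bm\Xi-\bm S_{t+1}$, $\bm g=\nabla V^\pi(\bm S_{t+1})$, and $\ell(\bm\delta)=V^\pi(\bm S_{t+1})+\langle \bm g,\bm\delta\rangle$. We assume the neighborhood in Assumption~\ref{assump:lipschitz_grad} contains the box $\Omega(\bm S_{t+1})$. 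Since the box is convex, every segment from $\bm S_{t+1}$ to a point of it stays inside it.

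\textbf{Step 1 (uniform remainder bound).} For any $\bm\Xi\in\Omega(\bm S_{t+1})$, apply the fundamental theorem of calculus along the segment $\bm S_{t+1}+\tau\bm\delta$, $\tau\in[0,1]$. This is justified by Assumption~\ref{assump:local_diff} together with continuity of the gradient, which follows from Assumption~\ref{assump:lipschitz_grad}. It gives
\begin{equation*}
V^\pi(\bm\Xi)-\ell(\bm\delta)=\int_0^1\bigl\langle \nabla V^\pi(\bm S_{t+1}+\tau\bm\delta)-\bm g,\ \bm\delta\bigr\rangle\,d\tau .
\end{equation*}
By Cauchy--Schwarz (the Frobenius inner product is the Euclidean one on the vectorized matrices) and the Lipschitz bound,
\begin{equation*}
|V^\pi(\bm\Xi)-\ell(\bm\delta)|\le\int_0^1 L_V\tau\|\bm\delta\|_2^2\,d\tau=\tfrac{L_V}{2}\|\bm\delta\|_2^2 .
\end{equation*}
Because $|\bm\delta|\le\bm\kappa$ componentwise, we have $\|\bm\delta\|_2\le\|\bm\kappa\|_2$. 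Hence $|V^\pi(\bm\Xi)-\ell(\bm\delta)|\le \frac{L_V}{2}\|\bm\kappa\|_2^2=:\varepsilon$ uniformly on $\Omega(\bm S_{t+1})$.

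\textbf{Step 2 (minima of uniformly close functions).} If $|f-h|\le\varepsilon$ on a common set, then $|\min f-\min h|\le\varepsilon$ (or $\inf$ if minima are not attained). We prove each direction separately.
\begin{itemize}
\item Lower bound: for every feasible $\bm\Xi$, $V^\pi(\bm\Xi)\ge \ell(\bm\delta)-\varepsilon\ge \min_{\bm\delta}\ell-\varepsilon$, so $\min V^\pi\ge \min\ell-\varepsilon$.
\item Upper bound: take the linear minimizer $\bm\Xi^{\star,\mathrm{FO}}$ from Theorem~\ref{thm:first_order_adversary}. Then $\min V^\pi\le V^\pi(\bm\Xi^{\star,\mathrm{FO}})\le \min\ell+\varepsilon$.
\end{itemize}
Corollary~\ref{cor:first_order_value_drop} gives $\min\ell=V^\pi(\bm S_{t+1})-\|\bm\kappa\odot\bm g\|_1$ exactly. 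Combining the two directions yields \eqref{eq:remainder_bound}.

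The only genuine obstacle is Step 1's domain issue. Assumption~\ref{assump:lipschitz_grad} holds only on an unspecified neighborhood, so we must state explicitly that this neighborhood contains $\Omega(\bm S_{t+1})$; this is the natural reading, with $\bm\kappa$ small. We also use that $\Omega$ is compact, so the minimum of the continuous $V^\pi$ exists. The remaining calculations are routine.
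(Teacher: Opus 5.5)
Your proposal is correct and follows the same route as the paper: the two-sided quadratic (descent-lemma) bounds from the Lipschitz gradient, the componentwise constraint giving $\|\bm\Xi-\bm S_{t+1}\|_2\le\|\bm\kappa\|_2$, and a comparison of the minima using the closed-form linearized minimum from Corollary~\ref{cor:first_order_value_drop}. Your fundamental-theorem-of-calculus derivation of the quadratic bound, your explicit upper/lower comparison of the minima, and your stated requirement that the Lipschitz neighborhood contain $\Omega(\bm S_{t+1})$ (with the Frobenius reading of $\|\cdot\|_2$) only spell out steps the paper leaves implicit.
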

\begin{proof}
    Under Assumption \ref{assump:lipschitz_grad}, for any $\bm \Xi$ in a neighborhood of $\bm S_{t+1}$,
    \begin{multline}
        V^\pi(\bm \Xi) \le V^\pi(\bm S_{t+1})+\langle \nabla V^\pi(\bm S_{t+1}), \bm \Xi-\bm S_{t+1} \rangle\\+\frac{L_V}{2}\|\bm \Xi-\bm S_{t+1}\|_2^2,
        \label{eq:descent_upper}
    \end{multline}
    and similarly, 
    \begin{multline}
        V^\pi(\bm \Xi) \ge V^\pi(\bm S_{t+1}) + \langle \nabla V^\pi(\bm S_{t+1}), \bm \Xi-\bm S_{t+1} \rangle\\-\frac{L_V}{2}\|\bm \Xi-\bm S_{t+1}\|_2^2.
        \label{eq:descent_lower}   
    \end{multline}
    For any $\bm \Xi \in \Omega(\bm S_{t+1})$, we have $|\bm \Xi-\bm S_{t+1}|\le \bm \kappa$, hence \begin{equation}
        \|\bm \Xi-\bm S_{t+1}\|_2^2 \le \|\bm \kappa\|_2^2.
        \label{eq:kappa_norm_bound}
    \end{equation}
    Minimizing the linear term over $\Omega(\bm S_{t+1})$ gives the following, based on Corollary \ref{cor:first_order_value_drop}
    \begin{align}
        \min_{\bm \Xi \in \Omega(\bm S_{t+1})}\left[V^\pi(\bm S_{t+1})+ \langle \nabla V^\pi(\bm S_{t+1}), \bm \Xi-\bm S_{t+1}\rangle\right] \notag \\=V^\pi(\bm S_{t+1})-\|\bm \kappa \odot \nabla V^\pi(\bm S_{t+1})\|_1.
        \label{eq:linearized_min_value}
    \end{align}
    Combining \eqref{eq:descent_upper}-\eqref{eq:descent_lower} with \eqref{eq:kappa_norm_bound}-\eqref{eq:linearized_min_value} yields
    {\small \begin{align*}
        \min_{\bm \Xi \in \Omega(\bm S_{t+1})} V^\pi(\bm \Xi)&\le V^\pi(\bm S_{t+1})- \|\bm \kappa \odot \nabla V^\pi(\bm S_{t+1})\|_1+\frac{L_V}{2}\|\bm \kappa\|_2^2, \\
        \min_{\bm \Xi \in \Omega(\bm S_{t+1})} V^\pi(\bm \Xi)&\ge V^\pi(\bm S_{t+1})-\|\bm \kappa \odot \nabla V^\pi(\bm S_{t+1})\|_1-\frac{L_V}{2}\|\bm \kappa\|_2^2.
    \end{align*}}
Combining the two inequalities proves \eqref{eq:remainder_bound}.
\end{proof}
 
Theorem \ref{thm:remainder_bound} provides a local approximation guarantee for the first-order adversarial surrogate: the gap between the exact worst-case value and its closed-form estimate is second-order in the corruption radius. Hence, for sufficiently small bounded observation perturbations, the first-order adversarial observation captures the dominant degradation in the value function of the next state.
Assumption \ref{assump:lipschitz_grad} requires care in practice: for a piecewise-affine critic, as adopted in Section \ref{sec:robust_ppo}, $\nabla V^\pi$ is piecewise constant and no finite $L_V$ satisfies \eqref{eq:lipschitz_grad} at a region boundary. Theorem \ref{thm:remainder_bound} still applies with the bound in \eqref{eq:remainder_bound} replaced by $G\|\bm\kappa\|_2$, where $G:=\sup_{\bm\Xi\in\Omega(\bm S_{t+1})} \|\nabla V^\pi(\bm\Xi)-\nabla V^\pi(\bm S_{t+1})\|_2$; in particular $G=0$ when
$\Omega(\bm S_{t+1})$ lies within a single region, where \eqref{eq:first_order_adversarial_state} is the exact minimizer.

\subsection{Robust PPO with Shared Actor-Critic}
\label{sec:robust_ppo}

Having established a closed-form expression for the worst-case adversarial perturbation, i.e., for the minimizer in \eqref{eq:robust_value},
we solve \eqref{eq:robust_policy_objective} using proximal policy optimization (PPO) \cite{schulman2017proximalpolicyoptimizationalgorithms} with a shared actor-critic network. Training is centralized in the sense that trajectories from all agents update one shared parameter set, whereas execution is decentralized: each sUAS acts on its own local observation.

We parameterize the policy with an actor $\pi_\theta$ and the value function with a critic $V_\phi$. As shown in Fig. \ref{fig:architecture}, the actor and critic share a common observation encoder and differ only in their output heads. Training proceeds in simulation and has two phases. In the first phase, we pretrain the architecture under clean observations ($R=0$) using standard PPO, yielding a set of frozen parameters $\bar\theta, \bar\phi$. In the second phase, we initialize another set of trainable parameters $\theta,\phi$ and optimize them under $R$-contaminated observations: at each transition, the true next state $\bm S_{t+1}$ is observed with probability $1-R$, and replaced by the adversarial perturbation (derived in Subsection \ref{sec:first_order_adversary})
\begin{equation}
    \bm \Xi_{t+1}^{\star,\mathrm{FO}} = \bm S_{t+1} - \bm \kappa \odot \mathrm{sign}\bigl( \nabla V_{\bar\phi}(\bm S_{t+1}) \bigr) \label{eq:nominal_teacher_adv_state}
\end{equation}
with probability $R$. The adversary is generated from the frozen teacher critic $V_{\bar{\phi}}$ throughout. This ensures that the adversary targets consistent vulnerabilities rather than co-adapting with the policy.

The actor and critic losses follow standard PPO, computed on contaminated trajectories. Specifically, let $\hat R_t^R$ denote the return target computed from the sampled trajectory under \eqref{eq:r_contam}, and let $\hat A_t^R$ denote the corresponding advantage estimate. The value head is trained by minimizing the squared error between its prediction and the realized return impacted by the $R$-contamination: \begin{equation}
    \mathcal{L}_V(\phi)
    = \mathbb E_t\Bigl[ \bigl(V_\phi(\tilde{\bm S}_t)-\hat R_t^R\bigr)^2 \Bigr].
    \label{eq:robust_value_loss}
\end{equation}
The actor is updated by the PPO clipped surrogate
{\small
\begin{equation}
    \mathcal{L}_{\mathrm{clip}}(\theta) = -\mathbb E_t \bigg[
    \min\Bigl( \rho_t(\theta)\hat A_t^R,\,
    \mathrm{clip}\bigl(\rho_t(\theta),1- \epsilon,1+\epsilon\bigr)\hat A_t^R
    \Bigr)
    \bigg],
    \label{eq:robust_ppo_actor}
\end{equation}
}
where  \(
    \rho_t(\theta)
    =
    \frac{\pi_\theta(u_t\mid \tilde{\bm S}_t)}
         {\pi_{\theta_{\mathrm{old}}}(u_t\mid \tilde{\bm S}_t)}.
    \label{eq:ppo_ratio}\)
Here $\theta_{\mathrm{old}}$ denotes the policy parameters before the update during training and $\epsilon=0.2$ is a hyperparameter \cite{schulman2017proximalpolicyoptimizationalgorithms}. An entropy bonus is added in the usual way to encourage exploration \cite{schulman2017proximalpolicyoptimizationalgorithms}.
Note that the training structure follows standard PPO, with observations probabilistically replaced by the closed-form adversarial perturbation \eqref{eq:nominal_teacher_adv_state}. The regularization terms introduced next further encourage the policy to behave consistently under clean and corrupted observations while preserving nominal safe separation performance.

\begin{figure}[!t]
    \centering
\includegraphics[width=\linewidth]{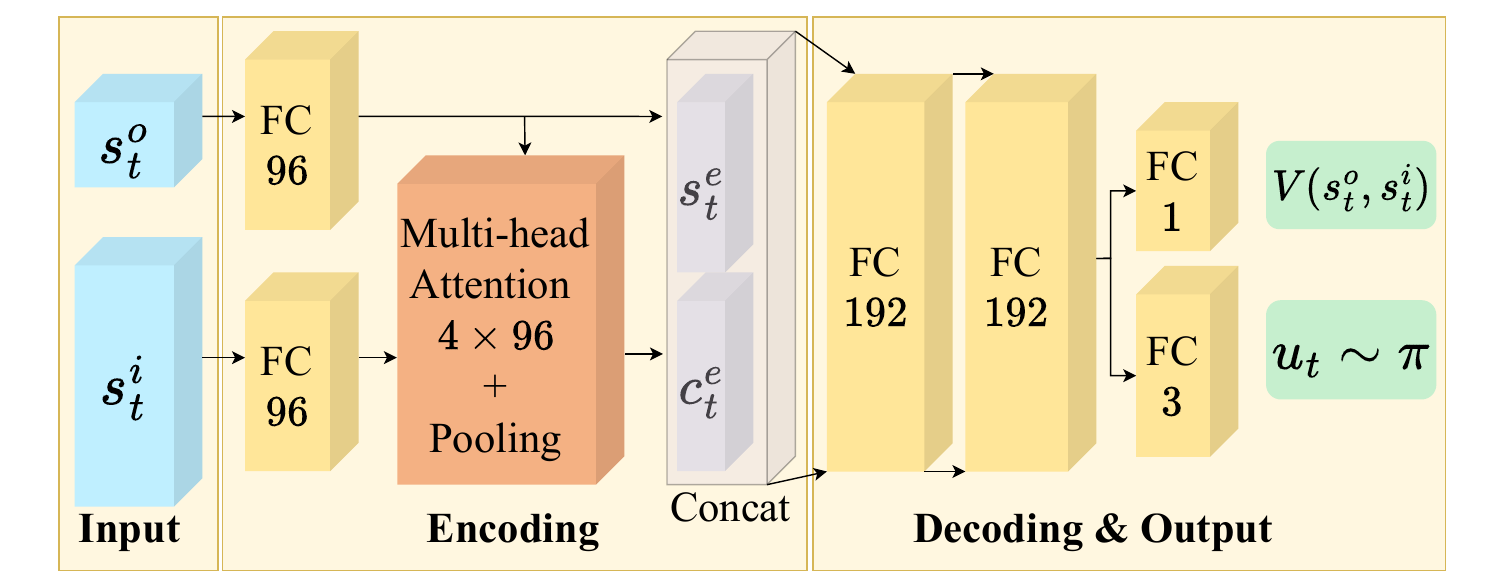}
    \caption{Neural network architecture. Ownship and intruder states are independently encoded through fully connected (FC) layers with LeakyReLU activations. Intruder embeddings are aggregated via multi-head attention with pooling. The concatenated representation branches into policy and value heads.}
    \label{fig:architecture}
\end{figure}

\subsection{Perturbation-Invariant  Policy Regularization}

We augment the robust PPO objective with two complementary Kullback-Leibler
(KL) regularizers: an invariance term that aligns the current policy on clean and adversarial observations, and an anchoring term that keeps the learned policy close to the pretrained nominal policy on clean observations.

Let $\pi_{\bar\theta}$ denote the pretrained nominal teacher policy. For the first-order adversarial observation $\bm \Xi_t^{\star,\mathrm{FO}}$ defined in \eqref{eq:nominal_teacher_adv_state}, we define the perturbation-invariance regularizer as
\begin{equation}
    \mathcal{L}_{\mathrm{inv}}(\theta) =\mathbb E_t \bigg[  D_{\mathrm{KL}}\bigl( \pi_\theta(\cdot \mid \bm S_t) ~\big\|~ \pi_\theta(\cdot\mid \bm \Xi_t^{\star,\mathrm{FO}} ) \bigr)  \bigg].
    \label{eq:inv_kl}
\end{equation}
This term penalizes the policy sensitivity to observation corruption, encouraging similar decisions under clean and perturbed inputs. Moreover, the teacher-anchor regularizer is
\begin{equation}
    \mathcal{L}_{\mathrm{anchor}}(\theta) = \mathbb E_t \bigg[   D_{\mathrm{KL}}\bigl( \pi_{\bar\theta}(\cdot\mid \bm S_t)~\big\|~ \pi_\theta(\cdot\mid \bm S_t)  \bigr)   \bigg]. 
    \label{eq:anchor_kl}
\end{equation}
This term discourages unnecessary drift from the nominal teacher on clean observations, ensuring that robustness does not come at the cost of performance. Thus, the complete actor objective becomes
\begin{equation}
    \mathcal{L}_{\mathrm{actor}}(\theta) = \mathcal{L}_{\mathrm{clip}}(\theta) +\lambda_{\mathrm{inv}}\,\mathcal{L}_{\mathrm{inv}}(\theta) + \lambda_{\mathrm{anchor}}\,\mathcal{L}_{\mathrm{anchor}}(\theta), \label{eq:full_actor_loss}
\end{equation}
where $\lambda_{\mathrm{inv}},\lambda_{\mathrm{anchor}}\ge 0$ are regularization weights.

The following proposition shows that controlling the invariance regularizer (so that $\mathbb{E}_{\bm S}\!\left[D_{\mathrm{KL}}\!\left(\pi_\theta(\cdot\mid \bm S)~\middle\|~\pi_\theta(\cdot\mid \bm \Xi^\star)\right)\right]\le B$)
directly bounds the one-step decision degradation induced by acting on corrupted observations. To that end, we  define the state-action value function for a policy $\pi$: 
\begin{equation*}
Q^\pi(\bm S_t, u_t) = \mathbb{E}[r(\bm S_{t+1}, u_t) + \gamma V^\pi(\bm S_{t+1}) \mid \bm S_t, u_t].
\end{equation*}

\begin{proposition}
\label{prop:performance_bound}
Suppose the policy satisfies
\begin{equation*}
\mathbb{E}_{\bm S}\!\left[D_{\mathrm{KL}}\!\left(\pi_\theta(\cdot\mid \bm S)~\middle\|~\pi_\theta(\cdot\mid \bm \Xi^\star)\right)\right]\le B,
\label{eq:kl_constraint}
\end{equation*}
where $\bm \Xi^\star \in \Omega(\bm S)$ is the adversarial observation and $B\ge 0$ is the expected invariance budget. Let
\begin{equation*}
V^\pi_{\mathrm{corrupt}}(\bm S)=\mathbb{E}_{u\sim \pi_\theta(\cdot\mid \bm \Xi^\star)}\bigl[Q^\pi(\bm S,u)\bigr],
\end{equation*}
denote the one-step value obtained when the action is selected from the corrupted observation, and assume
\begin{equation*}
|Q^\pi(\bm S,u)| \le Q_{\max}, \qquad \text{for all } \bm S,u.
\end{equation*}
Then
\begin{equation}
\mathbb{E}_{\bm S}\!\left[|V^\pi(\bm S)-V^\pi_{\mathrm{corrupt}}(\bm S)|\right]\le Q_{\max}\sqrt{2B}.
\label{eq:performance_bound}
\end{equation}
\end{proposition}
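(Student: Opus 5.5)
We bound the gap pointwise in $\bm S$ and then take expectations, using Pinsker's inequality to pass from the KL budget $B$ to a total-variation bound.

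\textbf{Step 1: write the gap as a difference of expectations.} For a fixed true state $\bm S$, we have
\[
V^\pi(\bm S)=\mathbb{E}_{u\sim\pi_\theta(\cdot\mid\bm S)}[Q^\pi(\bm S,u)],
\]
because the policy acts on the clean observation. This identity is the Bellman consistency of $Q^\pi$ and $V^\pi$. By definition, $V^\pi_{\mathrm{corrupt}}(\bm S)$ is the same expectation taken under $\pi_\theta(\cdot\mid\bm\Xi^\star)$. Hence
\[
V^\pi(\bm S)-V^\pi_{\mathrm{corrupt}}(\bm S)=\sum_{u\in\mathcal A}\bigl(\pi_\theta(u\mid\bm S)-\pi_\theta(u\mid\bm\Xi^\star)\bigr)Q^\pi(\bm S,u).
\]

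\textbf{Step 2: bound by total variation.} Applying H\"older's inequality with $|Q^\pi|\le Q_{\max}$ gives
\[
|V^\pi(\bm S)-V^\pi_{\mathrm{corrupt}}(\bm S)|\le Q_{\max}\,\|\pi_\theta(\cdot\mid\bm S)-\pi_\theta(\cdot\mid\bm\Xi^\star)\|_1=2Q_{\max}\,|\pi_\theta(\cdot\mid\bm S)-\pi_\theta(\cdot\mid\bm\Xi^\star)|_{\mathrm{TV}}.
\]
The factor $2$ must be removed to reach the stated constant. Since the two policies are probability distributions, the coefficients $\pi_\theta(u\mid\bm S)-\pi_\theta(u\mid\bm\Xi^\star)$ sum to zero. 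We may therefore subtract the constant $c=(\max_u Q^\pi+\min_u Q^\pi)/2$ from $Q^\pi(\bm S,\cdot)$ without changing the sum. After this centering, $|Q^\pi(\bm S,u)-c|\le Q_{\max}$ still holds, because $Q^\pi$ lies in $[-Q_{\max},Q_{\max}]$, an interval of length $2Q_{\max}$. The standard variational bound then gives
\[
|V^\pi-V^\pi_{\mathrm{corrupt}}|\le \bigl(\max_u Q^\pi-\min_u Q^\pi\bigr)\,|\cdot|_{\mathrm{TV}}\le 2Q_{\max}\,|\cdot|_{\mathrm{TV}}.
\]
This centering argument is the natural route, but as the last inequality shows, it still yields $2Q_{\max}$, not $Q_{\max}$.

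\textbf{Step 3: Pinsker.} Pinsker's inequality states $|p-q|_{\mathrm{TV}}\le\sqrt{\tfrac12 D_{\mathrm{KL}}(p\|q)}$. Combining it with Step 2 yields the pointwise bound
\[
|V^\pi(\bm S)-V^\pi_{\mathrm{corrupt}}(\bm S)|\le 2Q_{\max}\sqrt{\tfrac12 D_{\mathrm{KL}}}=Q_{\max}\sqrt{2D_{\mathrm{KL}}\bigl(\pi_\theta(\cdot\mid\bm S)\,\|\,\pi_\theta(\cdot\mid\bm\Xi^\star)\bigr)}.
\]
The factor-of-two issue from Step 2 therefore resolves here: $2\sqrt{1/2}=\sqrt2$, which matches the claimed constant exactly.

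\textbf{Step 4: take expectations.} We take $\mathbb{E}_{\bm S}$ of the pointwise bound. Since $x\mapsto\sqrt{x}$ is concave, Jensen's inequality gives $\mathbb{E}\sqrt{D_{\mathrm{KL}}}\le\sqrt{\mathbb{E}D_{\mathrm{KL}}}\le\sqrt B$. This yields \eqref{eq:performance_bound}.

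\textbf{Main point requiring care.} The main obstacle is Step 1, namely justifying $V^\pi(\bm S)=\mathbb{E}_{u\sim\pi_\theta(\cdot\mid\bm S)}Q^\pi(\bm S,u)$. This requires that $\pi$ and $\pi_\theta$ coincide and that $V^\pi$ is evaluated with clean observations, both of which follow from \eqref{eq:nominal_value}. With that identity in place, the rest of the argument is routine.
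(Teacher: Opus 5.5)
Your proof is correct and follows the paper's argument essentially line for line: with $p=\pi_\theta(\cdot\mid\bm S)$ and $q=\pi_\theta(\cdot\mid\bm\Xi^\star)$, you write the gap as $\sum_u (p(u)-q(u))Q^\pi(\bm S,u)$, bound it by $2Q_{\max}|p-q|_{\mathrm{TV}}$, apply Pinsker to get $Q_{\max}\sqrt{2D_{\mathrm{KL}}(p\|q)}$ pointwise, and finish with Jensen on the concave square root. The centering detour in your Step 2 is unnecessary, since the factor $2$ is absorbed by Pinsker's $\sqrt{1/2}$ as you eventually note. Your explicit check of $V^\pi(\bm S)=\mathbb{E}_{u\sim\pi_\theta(\cdot\mid\bm S)}[Q^\pi(\bm S,u)]$ justifies a step the paper uses without comment.
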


\begin{proof}
For any fixed $\bm S$, let $ p(\cdot)=\pi_\theta(\cdot\mid \bm S)$, and $q(\cdot)=\pi_\theta(\cdot\mid \bm \Xi^\star).$
Then,
\(
V^\pi(\bm S)-V^\pi_{\mathrm{corrupt}}(\bm S)=\sum_u \bigl(p(u)-q(u)\bigr)Q^\pi(\bm S,u).
\)

Using $|Q^\pi(\bm S,u)|\le Q_{\max}$ and the total-variation (TV) bound on expected differences:
\begin{equation}
\left|V^\pi(\bm S)-V^\pi_{\mathrm{corrupt}}(\bm S)\right|\le2Q_{\max}\,|p-q|_{\mathrm{TV}}.
\label{eq:tv_step}
\end{equation} 
By Pinsker's inequality (Lemma 12.6.1 in \cite{CoverThomas2006})
\begin{equation}
|p-q|_{\mathrm{TV}}\le\sqrt{\frac{1}{2}D_{\mathrm{KL}}(p\|q)}.
\label{eq:pinsker}
\end{equation}
Let $\Delta V(\bm S) := |V^\pi(\bm S)-V^\pi_{\mathrm{corrupt}}(\bm S)|$. Combining \eqref{eq:tv_step} and \eqref{eq:pinsker}, and then taking expectation over $\bm S$, gives
\begin{equation*}
\mathbb{E}_{\bm S}\!\left[\Delta V(\bm S)\right]\le2Q_{\max}\,\mathbb{E}_{\bm S}\!\left[\sqrt{\frac{1}{2}D_{\mathrm{KL}}\!\left(\pi_\theta(\cdot\mid \bm S)\|\pi_\theta(\cdot\mid \bm \Xi^\star)\right)}\right].
\end{equation*}
Applying Jensen's inequality to the concave square-root function yields $\mathbb{E}[\sqrt{X}] \le \sqrt{\mathbb{E}[X]},$
hence $\mathbb{E}_{\bm S}\!\left[\Delta V(\bm S)\right]\le2Q_{\max}\sqrt{\frac{B}{2}}=Q_{\max}\sqrt{2B},$
which proves \eqref{eq:performance_bound}.
\end{proof}

Proposition \ref{prop:performance_bound} bounds the one-step degradation or spoofing attack, and the next corollary generalizes the bound over the full planning horizon.

\begin{corollary}
\label{cor:robust_value}
Under the conditions of Proposition \ref{prop:performance_bound}, the expected degradation in robust next-state value under the probabilistic $R$-contamination model satisfies
\begin{equation}
\mathbb{E}\!\left[V^\pi(\bm S) -V^\pi_{\mathrm{rob}}(\bm S)\right]\le\frac{\gamma R}{1-\gamma}\,Q_{\max}\sqrt{2B}.
\label{eq:robust_value_bound}
\end{equation}
\end{corollary}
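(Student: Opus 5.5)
Subtracting the definition of $V^\pi_{\mathrm{rob}}$ in \eqref{eq:robust_value} from the Bellman form $V^\pi(\bm S)=\mathbb{E}_\pi[r(\bm S_{t+1},u_t)+\gamma V^\pi(\bm S_{t+1})\mid \bm S_t=\bm S]$ cancels the reward, leaving
\begin{equation*}
V^\pi(\bm S)-V^\pi_{\mathrm{rob}}(\bm S)=\gamma R\,\mathbb{E}_\pi\Bigl[V^\pi(\bm S_{t+1})-\min_{\bm\Xi\in\Omega(\bm S_{t+1})}V^\pi(\bm\Xi)\,\Big|\,\bm S_t=\bm S\Bigr].
\end{equation*}
This difference is nonnegative and carries exactly the factor $\gamma R$ in \eqref{eq:robust_value_bound}. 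So it suffices to bound the expected gap $\mathbb{E}[V^\pi(\bm S')-V^\pi(\bm\Xi^\star(\bm S'))]$ at the next state by $Q_{\max}\sqrt{2B}/(1-\gamma)$.

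For this I read $V^\pi(\bm\Xi^\star)$ as the value the agent actually obtains when its decisions are driven by the corrupted observation. The true state is still $\bm S'$, but actions are drawn from $\pi_\theta(\cdot\mid\bm\Xi^\star)$. The one-step version of this gap is exactly $V^\pi(\bm S')-V^\pi_{\mathrm{corrupt}}(\bm S')$, which Proposition~\ref{prop:performance_bound} bounds in expectation by $Q_{\max}\sqrt{2B}$.

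To extend this over the horizon, I unroll with a performance-difference / telescoping argument. Let $\tilde\pi$ be the policy that acts on the adversarial observation at every step. Then
\begin{equation*}
V^\pi-V^{\tilde\pi}=\sum_{k\ge0}\gamma^k\,\mathbb{E}_{\tilde\pi}\bigl[V^\pi(\bm S_k)-V^\pi_{\mathrm{corrupt}}(\bm S_k)\bigr].
\end{equation*}
Each term is bounded using Proposition~\ref{prop:performance_bound}, and summing the geometric series gives the factor $1/(1-\gamma)$. Multiplying by $\gamma R$ yields \eqref{eq:robust_value_bound}.

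The main obstacle is the distribution mismatch in this unrolling. Proposition~\ref{prop:performance_bound} holds under the state distribution $\mathbb{E}_{\bm S}$ appearing in the KL budget, whereas the telescoped terms are evaluated along trajectories of $\tilde\pi$. I would handle this by interpreting the budget $B$ as holding uniformly over the visited-state distributions, i.e.\ the on-policy distribution encountered during contaminated training. Equivalently, I would take $\mathbb{E}_{\bm S}$ in the corollary to be with respect to the discounted occupancy measure. Under that interpretation, each step contributes at most $Q_{\max}\sqrt{2B}$, and the bound follows. I would state this interpretation explicitly, since it is the one genuine modeling step beyond the proposition.
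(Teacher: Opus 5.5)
Your first step is correct: subtracting the Bellman equation from \eqref{eq:robust_value} gives exactly $V^\pi(\bm S)-V^\pi_{\mathrm{rob}}(\bm S)=\gamma R\,\mathbb{E}_\pi[V^\pi(\bm S')-\min_{\bm\Xi\in\Omega(\bm S')}V^\pi(\bm\Xi)]$. The gap is in the next step, where you ``read'' $V^\pi(\bm\Xi^\star)$ as the return from the true state $\bm S'$ with actions drawn from $\pi_\theta(\cdot\mid\bm\Xi^\star)$. That is a different quantity, not a reading of \eqref{eq:robust_value}. There, $V^\pi(\bm\Xi^\star)$ is the nominal value function evaluated at the point $\bm\Xi^\star$, i.e., the return of a trajectory whose \emph{true} state is $\bm\Xi^\star$. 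The budget $B$ only controls how the action distribution changes when the policy's \emph{input} is perturbed. By contrast, $V^\pi(\bm S')-\min_{\Omega(\bm S')}V^\pi$ measures how the value changes when the \emph{physical} state is perturbed. By Corollary~\ref{cor:first_order_value_drop} and Theorem~\ref{thm:remainder_bound}, it equals $\|\bm\kappa\odot\nabla V^\pi(\bm S')\|_1$ up to an error of at most $\frac{L_V}{2}\|\bm\kappa\|_2^2$, and $B$ does not constrain this. Concretely, a constant policy has $B=0$, so the right side of \eqref{eq:robust_value_bound} vanishes. Yet the left side is $\gamma R$ times a generically positive quantity, because moving the ownship up to $60$\,m closer to an intruder inside $(d_{pz},d_r]$ strictly lowers $q$ in \eqref{eq:reward}. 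So the corollary cannot be proved for $V^\pi_{\mathrm{rob}}$ as literally defined, and your identification is exactly the step that would fail.

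The paper's own proof makes the same substitution implicitly. It charges each future step, corrupted independently with probability $R$, the one-step decision loss of Proposition~\ref{prop:performance_bound}, and sums $\sum_{k\ge1}\gamma^kR\,Q_{\max}\sqrt{2B}$. That bounds a process where the true state evolves nominally and only the actions come from corrupted observations. Measured against that intended quantity, your route is different and more careful. The Bellman subtraction isolates $\gamma R$ exactly, and the performance-difference lemma accounts for the state-distribution shift that corrupted actions induce. You also make explicit the occupancy-measure assumption needed to apply Proposition~\ref{prop:performance_bound} along $\tilde\pi$'s trajectories, which the paper uses tacitly.

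Note, though, that the process you bound (one coin flip at $t+1$, then corrupted forever) is neither the one-step target \eqref{eq:robust_value} nor the paper's per-step contamination. The constants coincide only because both have the same expected discounted number of corrupted decisions, $\sum_{k\ge1}\gamma^kR$. If you instead keep the one-step structure of \eqref{eq:robust_value} and replace $V^\pi(\bm\Xi^\star)$ by $V^\pi_{\mathrm{corrupt}}(\bm S')$, Proposition~\ref{prop:performance_bound} gives the sharper bound $\gamma R\,Q_{\max}\sqrt{2B}$ directly, assuming $B$ holds under the law of $\bm S'$. In any version, the redefinition of $V^\pi_{\mathrm{rob}}$ must be stated as an explicit hypothesis. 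It, not the distribution mismatch, is the main modeling step.
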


\begin{proof}
By Proposition \ref{prop:performance_bound}, each corrupted decision incurs expected loss at most $Q_{\max}\sqrt{2B}$ relative to the clean-observation action. Under $R$-contamination, corruption occurs independently with probability $R$ at each future step. Summing the discounted one-step losses yields 
\begin{equation*}
\sum_{k=1}^\infty \gamma^k R\,Q_{\max}\sqrt{2B} = \frac{\gamma R}{1-\gamma}\,Q_{\max}\sqrt{2B},
\end{equation*}
which proves \eqref{eq:robust_value_bound}.
\end{proof}
In practice, the regularization weight $\lambda_\mathrm{inv}$ in \eqref{eq:full_actor_loss} controls the achieved invariance budget $B$: larger $\lambda_{\mathrm{inv}}$ drives $\mathcal{L}_{\mathrm{inv}}$ toward zero, reducing $B$ and tightening the bounds in \eqref{eq:performance_bound} and \eqref{eq:robust_value_bound}.
Proposition \ref{prop:performance_bound} and Corollary \ref{cor:robust_value} bound the expected discounted return. These results characterize how gracefully safety performance degrades with $R$ and the invariance budget.

\subsection{Overall Training Objective and Procedure}
\label{sec:training}

The overall training objective combines the losses from the previous subsections:
\begin{align}
\mathcal{L}_{\mathrm{total}}(\theta,\phi)=
\mathcal{L}_{\mathrm{clip}}(\theta) + c_V\,\mathcal{L}_V(\phi)-
c_H\,\mathcal{H}(\pi_\theta)
 \notag \\+
\lambda_{\mathrm{inv}}\,\mathcal{L}_{\mathrm{inv}}(\theta)+\lambda_{\mathrm{anchor}}\,\mathcal{L}_{\mathrm{anchor}}(\theta),
\label{eq:total_loss}
\end{align}
where $\mathcal{L}_{\mathrm{clip}}$ is the PPO clipped surrogate in \eqref{eq:robust_ppo_actor}, $\mathcal{L}_V$ is the robust value loss in \eqref{eq:robust_value_loss}, $\mathcal{H}(\pi_\theta)$ is the entropy bonus standard in PPO to encourage exploration~\cite{schulman2017proximalpolicyoptimizationalgorithms}, and $\mathcal{L}_{\mathrm{inv}}$ and $\mathcal{L}_{\mathrm{anchor}}$ are defined in \eqref{eq:inv_kl}--\eqref{eq:anchor_kl}. The coefficients $c_V,c_H,\lambda_{\mathrm{inv}},\lambda_{\mathrm{anchor}}\ge0$ balance value fitting, exploration, perturbation invariance, and teacher anchoring.

At each iteration, trajectories from all agents are aggregated into a centralized batch, from which robust bootstrap targets, advantages, and regularization terms are computed. The shared actor-critic parameters are then updated by minimizing \eqref{eq:total_loss}. The complete training procedure is summarized in Algorithm \ref{alg:robust_ppo}.

\begin{algorithm}[t]
\caption{Robust PPO with Adversarial Observation}
\label{alg:robust_ppo}
\begin{algorithmic}[1]
\REQUIRE Multi-agent environment $\mathcal{E}$, total iterations $T$, corruption schedule $\mathcal{R}$, corruption bounds $\bm{\kappa}$
\STATE \textbf{Phase 1: Nominal pre-training}
\STATE Train nominal teacher policy $\pi_{\bar\theta}$ and value network $V_{\bar\phi}$ using standard PPO with $R=0$
\STATE Freeze $\pi_{\bar\theta}$ and $V_{\bar\phi}$
\STATE \textbf{Phase 2: Robust training}
\STATE Initialize trainable shared actor-critic parameters $(\theta,\phi)$
\FOR{$k=1$ to $T$}
    \STATE Sample corruption rate $R \sim \mathcal{R}$
    \STATE Collect pooled trajectories from all agents using the current shared policy $\pi_\theta$
    \FOR{each transition $(\tilde{\bm S}_t,u_t,\bm S_{t+1})$ in the batch}
        \STATE Compute true reward $r_t \gets r(\bm S_{t+1},u_t)$
        \STATE Compute teacher gradient $\bm g_{t+1} \gets \nabla V_{\bar\phi}(\bm S_{t+1})$
        \STATE Construct first-order adversarial observation \eqref{eq:nominal_teacher_adv_state} 
        \STATE Sample next observation under probabilistic $R$-contamination \eqref{eq:r_contam}
        \STATE Store transition $(\tilde{\bm S}_t,u_t,r_t,\tilde{\bm S}_{t+1})$
        \STATE Compute invariance penalty $\ell_{\mathrm{inv},t}$ and $\ell_{\mathrm{anchor},t}$ per \eqref{eq:inv_kl}--\eqref{eq:anchor_kl}
    \ENDFOR
    \STATE Compute contaminated-rollout return targets $\hat R_t^R$ and advantages $\hat A_t^R$ using standard PPO return / GAE estimation on the realized sampled trajectories
    \STATE Form batch losses $
    \mathcal L_{\mathrm{clip}},
    \mathcal L_V,
    \mathcal L_{\mathrm{inv}},
    \mathcal L_{\mathrm{anchor}},
    \mathcal H(\pi_\theta)$
    and $\mathcal L_{\mathrm{clip}}$ uses $\hat A_t^R$
    \STATE Update shared parameters $(\theta,\phi)$ by minimizing \eqref{eq:total_loss}
\ENDFOR
\STATE \textbf{return} robust shared policy $\pi_\theta$
\end{algorithmic}
\end{algorithm}

\section{Experimental Evaluation}
\label{sec:experiments}

We evaluate the proposed robust MARL framework in the structured en-route airspace of Fig. \ref{fig:airspace}, designed within the BlueSky air traffic simulator \cite{Hoekstra2016BlueSky}. Traffic arrivals on each route follow a Poisson process with a minimum headway of $30$\,s. Each vehicle is modeled after the Amazon MK30 sUAS, with nominal cruise speed $20$\,m/s and admissible speed range $[7.5,36]$\,m/s.

\begin{figure}[!t]
    \centering
    \includegraphics[width=\linewidth]{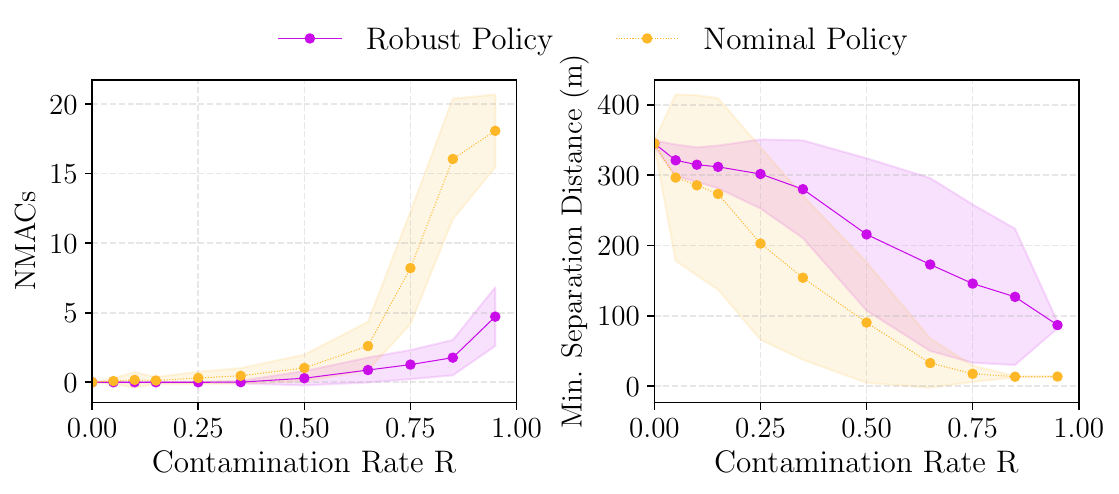}
    \caption{Safety performance under increasing observation corruption.  \emph{Left}: Near mid-air collision (NMAC) count of the small UAS. \emph{Right}: Minimum separation distance between the aircraft agents achieved per episode. The robust policy maintains near-zero NMACs through $R\approx0.35$ and degrades gracefully beyond, while the nominal policy deteriorates sharply after. Shaded regions indicate $\pm$ standard deviation over $100$ episodes.}
    \label{fig:policy_comparison}
\end{figure}

\subsection{Training Setup}
\label{sec:training_setup}

Both nominal and robust policies use the shared actor-critic architecture shown in Fig. \ref{fig:architecture}. Training is performed over $2\times 10^6$ steps. The nominal teacher policy is first trained under clean observations ($R=0$) using standard PPO. The robust policy is then trained from pooled multi-agent experience using the proposed method, with a curriculum on $
R \in \{0,~0.05,~0.15,~0.25,~0.35,~ 0.5\}.$
PPO hyperparameters follow standard settings, with additional regularization weights $\lambda_{\mathrm{inv}}=0.01$ and $\lambda_{\mathrm{anchor}}=0.01$. The learning rate is set to $2.5\times10^{-4}$ and the number of epoch $8$. 
Advantages $\hat A^R_t$ and returns $\hat R^R_t$ are computed by generalized advantage estimation (GAE) \cite{schulman2016gae} on the realized contaminated rollouts, with $\lambda=0.95$ and $\gamma=0.99$. 
The maximum number of intruders for each agent is $m=5$. The entropy and value loss coefficient are respectively $c_H=10^{-1}$ and $c_V=0.5$. The protected radius or near mid-air collision (NMAC) distance threshold is set to $d_{pz} = 100$ m, and for each sUAS, the intruder aircraft are detected within radius $d_r=500$ m. Hence the function $q(\cdot)$ and $g(\cdot)$ defined in the reward function \eqref{eq:reward} are defined as 
 \begin{align*}
     q(x) &= \begin{cases}
        0 \ \qquad\qquad \mathrm{if } \ x > d_r \\
         -\alpha + \beta x \quad \mathrm{if} \ d_{pz} < x \leqslant d_r \quad \text{(loss of separation)}\\
         -\alpha + \beta x -c_{\mathrm{NMAC}} \quad \mathrm{if} \ x \leqslant d_{pz} \quad \text{(NMAC)}
     \end{cases} \\
     g(u) &= -\lambda_u (u/\Delta v)^2,
 \end{align*}
 where $\alpha=0.1,~\beta=2\times10^{-4},~c_{\mathrm{NMAC}}=1, ~\lambda_u=0.001$ are empirically chosen and $\Delta v=2.57$ m/s (Section \ref{sec:state_action_dynamics}). 
 
 The corruption bound $\bm\kappa$ is chosen from reported urban GNSS error magnitudes (e.g., $\kappa_x=\kappa_y=60 \mathrm{m}, \kappa_v=2 \mathrm{m/s}, \kappa_\psi=5^\circ$), consistent with the uncertainty model introduced in Section \ref{sec:mdp_formulation}.

\subsection{Evaluation Protocol}
\label{sec:evaluation_protocol}

We compare the nominal policy trained without adversarial corruption against the proposed robust policy. Both policies are evaluated under the same conditions over a range of corruption rates $R\in[0,1]$. The results are averaged over $100$ simulated episodes. 

We report the following safety metrics: the number of near mid-air collisions (NMACs) and the minimum separation distance achieved during each episode on average. Lower NMAC counts and larger minimum LOS distances indicate better robustness.

\subsection{Results and Discussion}
\label{sec:main_results}

\begin{figure}
    \centering
    \includegraphics[width=\linewidth]{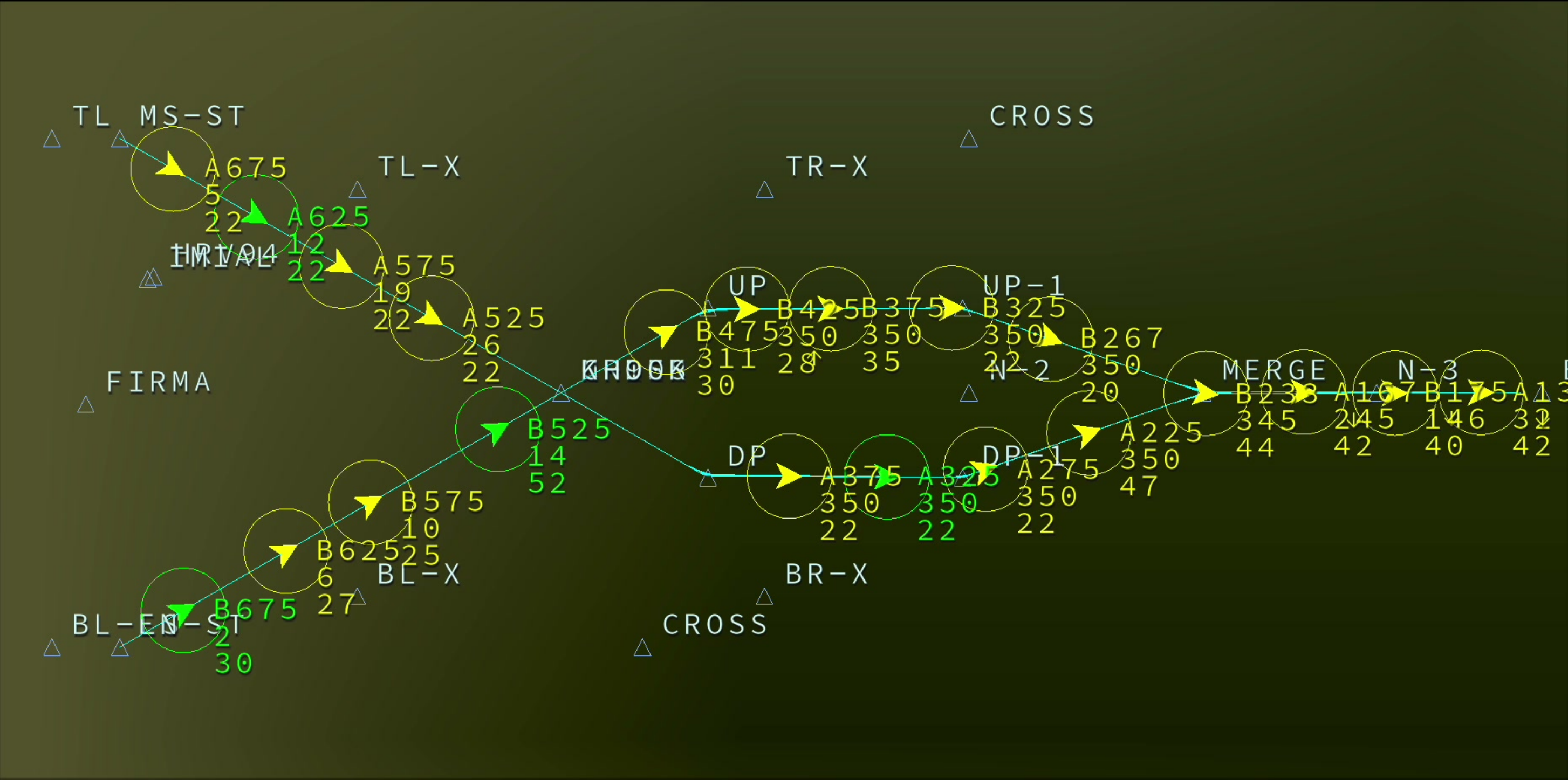}
    \caption{Snapshot of high-density sUAS traffic under the robust policy at corruption rate $R = 0.35$. Aircraft (triangles) traverse structured routes through waypoints, with separation buffers shown as circles. Yellow indicates aircraft approaching loss of separation; green indicates nominal separation status. Labels display flight ID, speed (knots), and altitude (ft). Despite adversarial GPS perturbations, all aircraft maintain safe separation.}
    \label{fig:traffic_snapshot_robust_policy}
\end{figure}

\begin{figure}
    \centering
    \includegraphics[width=\linewidth]{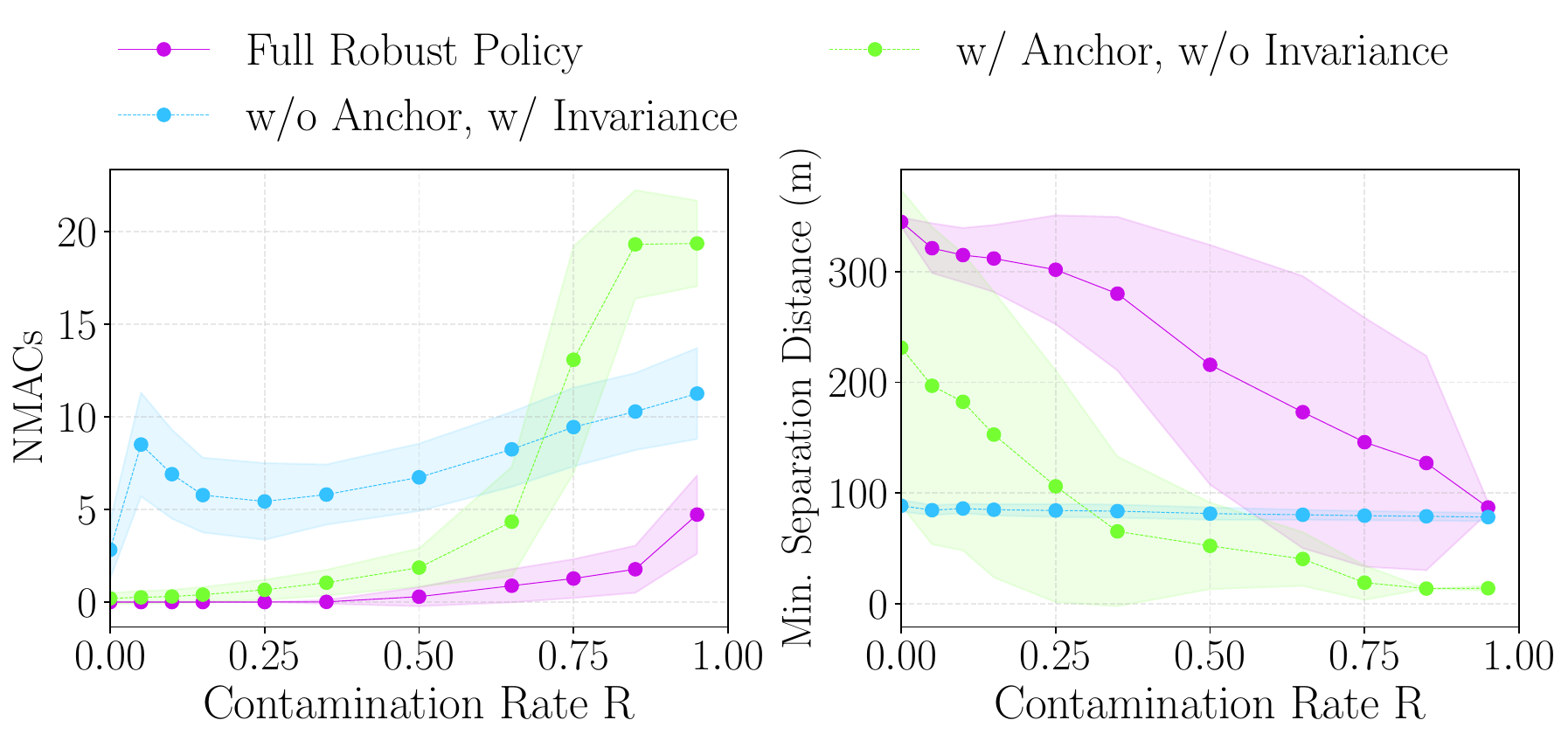}
    \caption{Ablation study isolating the contributions of invariance and anchoring regularization. \emph{Left}: NMAC count. \emph{Right}: Minimum separation distance. Invariance regularization without anchoring (blue) destabilizes training, yielding poor performance even at $R=0$. Anchoring alone (green) preserves nominal behavior but degrades sharply at high $R$. 
    The full method (purple) combines both regularizers for consistent performance. Shaded regions indicate $\pm$ standard deviation over $100$ episodes.}
    \label{fig:ablation_study}
\end{figure}

Fig. \ref{fig:policy_comparison} compares the robust and nominal policies across adversarial observation perturbation rates $R\in[0,1]$. Under clean observation ($R=0$), both policies achieve comparable safety performance, confirming that robust training does not sacrifice nominal capability. As $R$ increases, however, their behaviors diverge sharply. 
The nominal policy maintains near-zero NMACs only through $R\approx 0.25$, after which collision counts rise steeply, reaching approximately $18$ NMACs per episode at $R=0.95$. In contrast, the robust policy sustains near-zero NMACs through $R\approx0.35$ and degrades more gracefully beyond, with roughly $5$ NMACs at $R=0.95$. The minimum separation distance (right panel) tells a consistent story: the robust policy preserves larger safety margins across all corruption levels, with the gap widening as $R$ increases. Fig. \ref{fig:traffic_snapshot_robust_policy} illustrates a representative traffic scenario under the robust policy at $R=0.35$. Despite adversarial perturbations, all aircraft maintain safe separation as they traverse the structured airspace. 

\textbf{Ablation Study} Fig. \ref{fig:ablation_study} isolates the contributions of each regularization term. Two findings emerge. First, the invariance regularization without anchoring (blue) destabilizes training: NMACs are high even at $R=0$, indicating that enforcing output consistency without a stable reference collapses the learned representation. Second, anchoring alone (green) preserves nominal competence at low corruption but degrades sharply beyond $R\approx0.5$.  
The full method (purple) achieves the strongest performance while satisfying the invariance condition underlying the degradation bound of Proposition \ref{prop:performance_bound}.

\textbf{Limitations.} Three modeling choices bound the scope of these results. 
First, contamination is independent across time steps, whereas operational GNSS attacks are typically correlated. In a drag-off attack, the spoofer matches the authentic signal, captures the receiver's tracking loops, and then walks the reported position away from truth slowly enough that the implied motion remains physically plausible \cite{AndrewUAVcontrolViaGPS,psiaki2016gnss}. The resulting deviation is smooth and self-consistent rather than independently resampled. An adversary that commits to a direction over many steps is not covered by Definition \ref{def:rcontam}, and we expect the improvement over the nominal policy reported in
Fig. \ref{fig:policy_comparison} to diminish as the attack becomes more persistent.
Second, the baseline is a nominal policy rather than an alternative robust method, so the experiments establish that adversarially-informed training helps, not that the first-order adversary is preferable to iterative adversarial training or to observation-noise augmentation.
Third, the observation model assumes synchronous receipt of corrupted broadcasts, leaving communication delay, packet loss, and asynchronous Remote ID updates out of scope. Moreover, because separation is a joint outcome, an agent's advantage estimate cannot separate its own contribution from that of the aircraft against which it is resolving.


\section{Conclusion}
\label{sec:conclusion}

 We presented a robust MARL framework for sUAS separation assurance under GPS degradation and spoofing. The key contributions are a closed-form adversarial perturbation that replaces the iterative inner optimization of adversarial training with a single gradient evaluation while approximating the worst-case perturbation to second order, and a KL-based regularization that provably bounds performance degradation under observation corruption. These are integrated into a PPO algorithm with centralized training but decentralized execution.
 The simulation experiments demonstrate the effectiveness of the robust MARL policy and its superiority compared to a baseline policy in maintaining safe separation between the sUASs under GPS degradation and spoofing.

\bibliography{references}

@book{CoverThomas2006,
  author    = {Cover, Thomas M. and Thomas, Joy A.},
  title     = {Elements of Information Theory},
  publisher = {Wiley-Interscience},
  year      = {2006},
  edition   = {2nd},
  address   = {Hoboken, New Jersey},
  isbn      = {978-0-471-24195-9},
}

@inproceedings{zhang_robust_rl_state,
author = {Zhang, Huan and Chen, Hongge and Xiao, Chaowei and Li, Bo and Liu, Mingyan and Boning, Duane and Hsieh, Cho-Jui},
title = {{Robust deep reinforcement learning against adversarial perturbations on state observations}},
year = {2020},
isbn = {9781713829546},
booktitle = {Proceedings of the 34th International Conference on Neural Information Processing Systems}
}

@article{psiaki2016gnss,
  author  = {Psiaki, Mark L. and Humphreys, Todd E.},
  title   = {{GNSS} Spoofing and Detection},
  journal = {Proceedings of the IEEE},
  volume  = {104},
  number  = {6},
  pages   = {1258--1270},
  year    = {2016},
  doi     = {10.1109/JPROC.2016.2526658}
}

@online{faa_part135_2023,
  author       = {{Federal Aviation Administration}},
  title        = {{Package Delivery by Drone (Part 135)}},
  year         = {2023},
  urldate      = {2025-08-30},
  organization = {U.S. Department of Transportation}
}

@online{faa_remoteid2021,
    author = {{Federal Aviation Administration}},
    title = {{Remote Identification of Unmanned Aircraft (14 CFR Part 89)}},
    year = {2021},
    urldate      = {2025-03-19},
    organization = {U.S. Department of Transportation}

}

@online{marquand_bvlos_dallas_2024,
  author  = {Marquand, Y. L.},
  title   = {{FAA Authorises Zipline and Wing for BVLOS Operations in Dallas}},
  year    = {2024},
  urldate = {2025-08-30},
  organization = {Revolution.Aero}
}

@inproceedings {HarshadUAVTakeOver,
    author = {Harshad Sathaye and Martin Strohmeier and Vincent Lenders and Aanjhan Ranganathan},
    title = {{An Experimental Study of {GPS} Spoofing and Takeover Attacks on {UAVs}}},
    booktitle = {31st USENIX Security Symposium (USENIX Security 22)},
    year = {2022},
    pages = {3503--3520},
    month = aug,
    publisher = {USENIX Association}
}

@INPROCEEDINGS{ames2019,
  author={Ames, Aaron D. and Coogan, Samuel and Egerstedt, Magnus and Notomista, Gennaro and Sreenath, Koushil and Tabuada, Paulo},
  booktitle={2019 18th European Control Conference (ECC)}, 
  title={{Control Barrier Functions: Theory and Applications}}, 
  month = jun,
  year={2019},
    volume={},
  number={},
  pages={3420-3431},
  doi={10.23919/ECC.2019.8796030}
}

@ARTICLE{dawson2022,
  author={Dawson, Charles and Gao, Sicun and Fan, Chuchu},
  journal={IEEE Transactions on Robotics}, 
  title={{Safe Control With Learned Certificates: A Survey of Neural Lyapunov, Barrier, and Contraction Methods for Robotics and Control}}, 
  year={2023},
  volume={39},
  number={3},
  pages={1749-1767},
  doi={10.1109/TRO.2022.3232542}}

@inproceedings{qin2021,
title={{Learning Safe Multi-agent Control with Decentralized Neural Barrier Certificates}},
author={Zengyi Qin and Kaiqing Zhang and Yuxiao Chen and Jingkai Chen and Chuchu Fan},
booktitle={International Conference on Learning Representations},
month = jan,
year={2021},
}

@article{AndrewUAVcontrolViaGPS,
author = {Kerns, Andrew J. and Shepard, Daniel P. and Bhatti, Jahshan A. and Humphreys, Todd E.},
title = {{Unmanned Aircraft Capture and Control Via GPS Spoofing}},
month = jul,
year = {2014},
publisher = {John Wiley and Sons Ltd.},
journal = {J. Field Robot.},
volume = {31},
number = {4},
pages = {617--636},

}

@article{brittain2020deepmultiagentreinforcementlearning,
  author    = {Mark Brittain and Xuxi Yang and Peng Wei},
  title     = {{Autonomous Separation Assurance with Deep Multi-Agent Reinforcement Learning}},
  journal   = {Journal of Aerospace Information Systems},
  volume = {18},
  number = {12},
  pages = {890--905},
  year      = {2021},
  publisher = {AIAA},
}

@online{faa_7110_65_speed_adjustment,
  title        = {{FAA Order JO 7110.65BB: Air Traffic Control ---Section 7. Speed Adjustment}},
  author = {{Federal Aviation Administration}},
  year = {2025},
  organization = {U.S. Department of Transportation},
  urldate      = {2026-03-04}
}

@article{chen2023integratedconflictmanagementuam,
  author={Chen, Shulu and Evans, Antony D. and Brittain, Marc and Wei, Peng},
  journal={IEEE Transactions on Intelligent Transportation Systems}, 
  title={{Integrated Conflict Management for UAM With Strategic Demand Capacity Balancing and Learning-Based Tactical Deconfliction}}, 
  month = Aug,
  year={2024},
  volume = {25},
  number = {8},
  pages = {10049--10061},
}

@inproceedings{Hoekstra2016BlueSky,
  author    = {Jacco M. Hoekstra and Joost Ellerbroek},
  title     = {{BlueSky ATC Simulator Project: An Open Data and Open Source Approach}},
  booktitle = {Proceedings of the 7th International Conference on Research in Air Transportation (ICRAT)},
  year      = {2016},
  pages     = {1--8},
  publisher = {FAA/Eurocontrol USA/Europe},
}

@article{schulman2017proximalpolicyoptimizationalgorithms,
      title={{Proximal Policy Optimization Algorithms}}, 
      author={John Schulman and Filip Wolski and Prafulla Dhariwal and Alec Radford and Oleg Klimov},
      journal={eprint arXiv:1707.06347},
      year={2017},      
}

@inproceedings{Gutierrez2024multipath,
  author       = {Gutierrez, Julian and Gilabert, Russell and Dill, Evan and Hernandez, Guillermo and Kaeli, David and Closas, Pau},
  title        = {{Multipath Mitigation via Clustering for Position Estimation Refinement in Urban Environments}},
  booktitle    = {{ION Pacific PNT Conference}},
  month = {April},
  year         = {2024},
  institution  = {NASA Langley Research Center},
  volume = {},
  pages = {556-568},
}

@inproceedings{Peretic2025gnsserrors,
  author       = {Peretic, Matt and Gilabert, Russell and Carroll, Johnson and Gutierrez, Julian and Moore, Andrew and Christie, Jonathan and Dill, Evan T.},
  title        = {{Statistical Analysis of GNSS Multipath Errors in Urban Canyons}},
  booktitle    = {{IEEE/ION Position, Location and Navigation Symposium (PLANS)}},
  year         = {2025},
  volume={},
  number={},
  pages={1216-1225},
  doi={10.1109/PLANS61210.2025.11028411},
  institution  = {NASA Langley Research Center and The MITRE Corporation},
}

@InProceedings{wang2022policygradientmethodrobust,
  title = 	 {{Policy Gradient Method For Robust Reinforcement Learning}},
  author =       {Wang, Yue and Zou, Shaofeng},
  booktitle = 	 {Proceedings of the 39th International Conference on Machine Learning},
  year = 	 {2022},
  month = jun,
  pages = {23484--23526},
}

@inproceedings{GoodfellowShlensSzegedy2015,
  title     = {{Explaining and Harnessing Adversarial Examples}},
  author    = {Goodfellow, Ian J. and Shlens, Jonathon and Szegedy, Christian},
  booktitle = {Proceedings of the 3rd International Conference on Learning Representations (ICLR)},
  year      = {2015},
}

@inproceedings{schulman2016gae,
  title={{High-Dimensional Continuous Control Using Generalized Advantage Estimation}},
  author={Schulman, John and Moritz, Philipp and Levine, Sergey and
          Jordan, Michael I. and Abbeel, Pieter},
  booktitle={International Conference on Learning Representations (ICLR)},
  year={2016}
}

@inproceedings{HuangEtAl2017,
  title     = {{Adversarial Attacks on Neural Network Policies}},
  author    = {Sandy Huang and Nicolas Papernot and Ian Goodfellow and Yan Duan and Pieter Abbeel},
  booktitle = {Proceedings of the International Conference on Learning Representations Workshops},
  month = feb,
  year      = {2017},
}

@article{Iyengar2005,
  title   = {{Robust Dynamic Programming}},
  author  = {Iyengar, Garud N.},
  journal = {Mathematics of Operations Research},
  volume  = {30},
  number  = {2},
  pages   = {257--280},
  year    = {2005},
}

@article{Nilim2005,
  title   = {{Robust Control of Markov Decision Processes with Uncertain Transition Matrices}},
  author  = {Nilim, A. and El Ghaoui, L.},
  journal = {Operations Research},
  volume  = {53},
  number  = {5},
  pages   = {780--798},
  year    = {2005},
}

@inproceedings{GleaveEtAl2019,
  title={{Adversarial policies: Attacking deep reinforcement learning}},
  author={Gleave, Adam and Dennis, Michael and Wild, Cody and Kant, Neel and Levine, Sergey and Russell, Stuart},
  booktitle={International Conference on Learning Representations},
  year={2020},
}

@inproceedings{PintoDavidsonSukthankarGupta2017,
  title     = {{Robust Adversarial Reinforcement Learning}},
  author    = {Pinto, Lerrel and Davidson, James and Sukthankar, Rahul and Gupta, Abhinav},
  booktitle = {Proceedings of the 34th International Conference on Machine Learning (ICML)},
  series    = {Proceedings of Machine Learning Research},
  volume    = {70},
  pages     = {2817--2826},
  year      = {2017}
}
\bibliographystyle{IEEEtran}

\end{document}